\documentclass[english,11pt]{article}
\include{macros}

\begin{document}

\title{\Large LLM-Powered Virtual Population for Demand Simulation and Pricing}

\author{
	Chengpiao Huang\footnote{Department of Industrial Engineering and Operations Research, Columbia University. Email: \href{mailto:chengpiao.huang@columbia.edu}{chengpiao.huang@columbia.edu}.}
	\and
	Kaizheng Wang\footnote{Department of Industrial Engineering and Operations Research and Data Science Institute, Columbia University. Email: \href{mailto:kaizheng.wang@columbia.edu}{kaizheng.wang@columbia.edu}.}
}

\maketitle

\begin{abstract}

We develop an LLM-powered virtual population model that simulates demand for pricing decisions, in settings where products are described by rich unstructured information, such as text descriptions and images, and where decision makers need not only mean-demand predictions but also uncertainty estimates for counterfactual prices. Our model represents exposed customers as draws from a finite mixture of customer personas. For each persona, product, and candidate price, an LLM elicits a persona-level purchase probability using both structured persona information and unstructured product information. These probabilities are aggregated through calibrated mixture weights to form a predictive distribution of aggregate demand. The resulting simulator can evaluate counterfactual prices under various pricing objectives, including expected revenue and risk-aware criteria such as conditional value at risk. We test the framework on an online H\&M fashion dataset with product descriptions and images. The calibrated LLM-based simulator achieves the best overall predictive performance among the models considered, and supports sample-efficient pricing decisions. Our framework provides a practical way to use LLMs as demand simulators for products with limited historical demand data but rich product information. By producing a full predictive demand distribution rather than only a point forecast, it enables managers to compare candidate prices, quantify demand uncertainty, and choose prices that target either average-case revenue or risk-aware objectives.

\end{abstract}

\noindent{\bf Keywords:} Large language models, demand simulation, pricing, risk-aware decision-making

\section{Introduction}

Demand modeling is a central problem in pricing. Classical approaches typically rely on explicitly specified parametric (e.g., linear) demand models \citep{BRu12, KZe14, CLP20, BKe21}. While these models offer interpretability and strong theoretical foundations, they impose restrictive assumptions on functional forms, and often struggle to capture rich product information such as detailed text descriptions and images.

Recent advances in LLMs offer a new opportunity to address this challenge. Unlike traditional parametric demand models, LLMs can directly process rich, unstructured product representations, including images and text descriptions of varying lengths \citep{BHA21}. This flexibility allows demand responses to depend on information that is difficult to encode in fixed-dimensional feature vectors. As a result, LLMs offer a different modeling paradigm: instead of manually specifying a demand function over a pre-defined set of covariates, we delegate feature construction and modeling to the LLM.

In addition, LLMs provide a flexible framework for risk-aware decision-making. Traditional demand models typically focus on estimating mean demand and therefore lead naturally to objectives such as the expected revenue. In practice, however, decision makers often care not only about average performance but also about the uncertainty and risk of pricing decisions. As LLMs naturally generate probability distributions, they support modeling the entire demand distribution rather than only its mean. This makes it possible to evaluate and optimize risk-aware objectives, such as conditional value at risk (CVaR), in addition to the standard expected revenue.

In this paper, we propose an LLM-based framework for demand simulation in pricing that incorporates rich unstructured product information, including text and images. The key idea is to elicit persona-level purchase probabilities from an LLM and aggregate them into a demand distribution. Our simulator allows for counterfactual evaluation of prices for new products. As our framework models the entire demand distribution, it supports not only the standard expected revenue objective, but also risk-aware pricing objectives such as CVaR. Using an online sales dataset from H\&M, we show that the fitted simulator is well-aligned with the observed data.

\paragraph{Related works.} Our work lies at the intersection of pricing, LLM-based simulation, and risk-aware decision-making. Below we give a review of the related works, which is by no means exhaustive. 

Pricing and demand estimation are central topics in operations research. Classical models typically impose a structured demand model, such as a parametric and feature-based model, and then design pricing policies that balance learning and revenue maximization \citep{BRu12,KZe14,CLP20,BKe21}. While these models offer interpretability, tractable optimization, and strong theoretical foundations, they impose restrictive assumptions on functional forms, and often struggle to capture rich product information such as detailed text descriptions and images.

A growing literature uses LLMs to simulate human behavior in various areas, including economic and social science experiments \citep{AAK23,CLS23,Hor23,ZHS24} and market research \citep{BIN23,GTo23,GSi24}. This line of work suggests that LLMs can serve as flexible simulators of heterogeneous human responses, while also emphasizing the need for validation and calibration against observed data. Motivated by this, a number of related works study how LLM-generated synthetic data should be calibrated \citep{LSA24,BNS25,WZZ26,HWW25,YXi26,FHP26,LWZ26} so that synthetic response distributions better align with real populations. Our work has a similar spirit, where we calibrate and aggregate LLM-simulated outputs to construct a demand distribution for pricing.

A number of recent works have also used LLMs or AI agents in operations and supply chains. One line of work uses LLMs or transformer directly as decision-makers for inventory control, supply-chain optimization, and other operational tasks \citep{CLR21,LMZ23,DHJ25,LCZ25,WCT25,BFM26}. Another line uses LLMs to simulate operational environments or generate synthetic data for downstream decisions \citep{BCC26,WZh26}. Our work belongs to the latter: we use LLMs to simulate customer purchase behavior and fit a demand distribution, and the final pricing decision is made by optimizing an objective over the fitted distribution.

Our work is also related to the literature on risk-aware decision-making, which accounts for adverse outcomes beyond average performance \citep{SDR21}. A large stream of work studies coherent risk measures and tail-risk objectives, including CVaR, which admits tractable reformulations and has become a standard criterion for risk-aware optimization \citep{ADE99,RUr00}. Recent data-driven work further studies risk-aware decision-making with contextual information, sample-based approximations, profit-risk constraints, and distributionally robust shortfall-risk formulations \citep{LCL22,GXu18,TDX25}. A complementary line connects uncertainty quantification to downstream decisions, including conformal risk control and prediction-set-based decision rules that optimize robust risk under coverage guarantees \citep{ABF24,WDo26}. In contrast to works that train a model for one prespecified risk criterion or shifted scenario distribution, our framework directly models a full predictive demand distribution, which can then be reused for multiple pricing objectives.
Recent works also apply generative models, such as diffusion, flow-based, and decision-focused generative models, to scenario generation and robust or risk-sensitive decision-making \citep{WCL26,CZX26}.

\section{Problem Setup and Methodology}

\subsection{Background}

Our goal is to build a simulator for customer demand that can be used to design and evaluate downstream decision-making algorithms, such as pricing policies. Given a product $u$, rich product information (such as detailed text descriptions and images), and a posted price $p$, the simulator specifies a demand distribution from which demand samples can be generated.

To build this simulator, we assume access to historical data on realized demand under different prices. Let $\{u_j:j\in\cJ\}$ denote the products. For each product $j\in\cJ$, we observe rich product information, together with historical price-demand pairs $\cD_j = \{(p_{j,i},d_{j,i}):i\in\cI_j\}$, where $p_{j,i}$ is the posted price for product $j$ in instance $i$ (e.g., on a day), and $d_{j,i}$ is the corresponding demand.

Our goal is to use the dataset $\bigcup_{j\in\cJ}\cD_j$ to construct a demand simulator that generalizes across products and prices while conditioning on rich unstructured product information. Traditional parametric demand models, which rely on fixed-dimensional feature representations and pre-specified functional forms, are not well suited to this setting. In particular, they cannot naturally accommodate variable-length text descriptions or image inputs.

To handle rich product information such as text descriptions and images, we use an LLM to simulate purchase decisions for heterogeneous customer groups, then aggregate these simulated decisions into an aggregate demand model. This approach captures customer heterogeneity without imposing restrictive parametric assumptions on preferences or price sensitivity. We now introduce our model.

\subsection{Persona-Based Modeling} 

\paragraph{A virtual population.}
Consider an e-commerce platform on which a product $u$ is displayed at a posted price $p$. On a given day, suppose that $N$ customers are exposed to the product information through search, browsing, or recommendation. We view these customers as independent draws from a virtual population. With probability $\pi_0$, a customer is not actively interested in the product and therefore does not purchase, regardless of price. Otherwise, the customer belongs to one of $K$ interested customer types, described by profiles $\{x_k\}_{k=1}^K$, with population proportions $\{\pi_k\}_{k=1}^K$. Conditional on being an interested type-$k$ customer, the customer purchases one unit with probability $q(x_k,u,p)$.

Under this construction, the marginal purchase probability of an exposed customer is
\[
\sum_{k=1}^K \pi_k(1-\pi_0)q(x_k,u,p).
\]
Equivalently, if we define $\alpha_k=\pi_k(1-\pi_0)$ for $k\in[K]$ and $\alpha_{K+1}=\pi_0$, then $\balpha\in\Delta^K$ and the virtual population can be represented as a mixture of $K$ active personas plus one inactive dummy persona. \Cref{fig-virtual-population} illustrates this aggregation logic.

\begin{figure}[h]
	\centering
	\caption{Virtual Population Induced by LLM Personas\label{fig-virtual-population}}
	\includegraphics[width=0.8\linewidth]{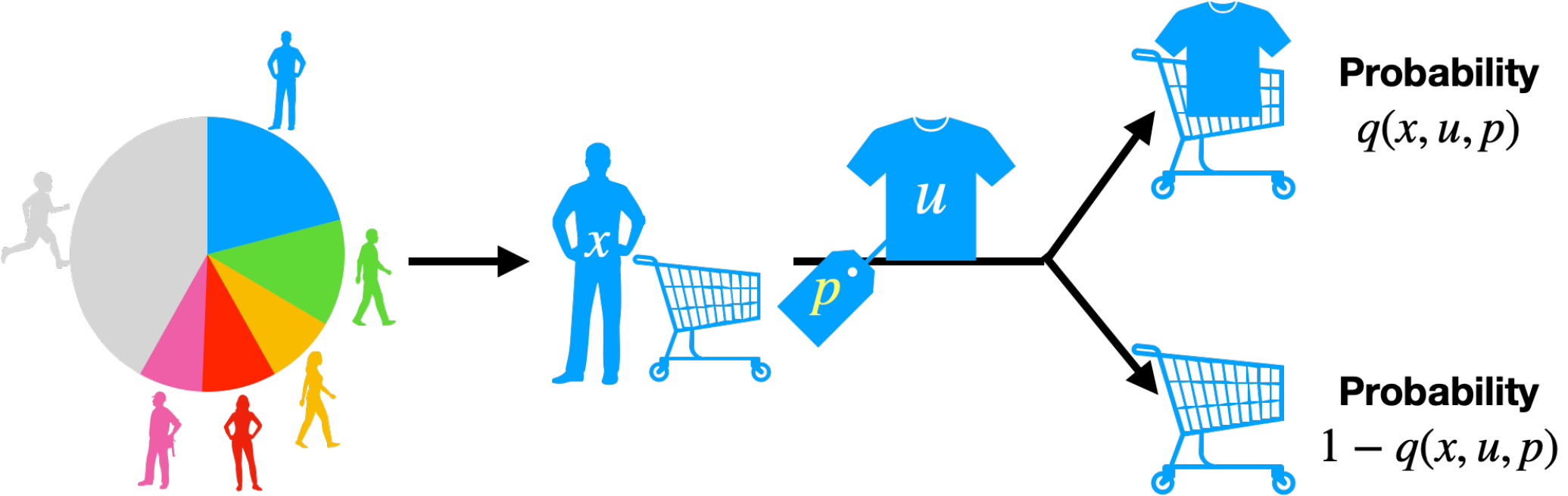}
\end{figure}

We now describe how we use an LLM to simulate this virtual population. We first construct a collection of customer personas $x_1,...,x_K$. Each persona represents a behavioral archetype characterized by demographic and purchasing attributes, such as age group, engagement frequency, typical paid price, and top purchase product category. For each persona $x_k$, product $u$, and candidate price $p$, we query the LLM for a purchase probability. The prompt includes persona characteristics, product information (text and image), and price $p$, and asks for the likelihood $q(x_k,u,p)\in [0,1]$ that persona $x_k$ purchases product $u$ at price $p$.

From a modeling perspective, the LLM-generated purchase probabilities $\{q(x_k,u,p)\}_{k=1}^K$ can be interpreted as \emph{persona- and price-dependent behavioral features}. Unlike traditional parametric demand models, which rely on fixed-dimensional product covariates and impose prespecified functional forms over price, the LLM implicitly maps rich product information, customer attributes, and price into a feature space. This feature space varies across personas and prices, and can flexibly adapt to unstructured product information.

To aggregate demand across heterogeneous customer personas $x_1,...,x_K$, we assume that exposed customers make independent binary purchase decisions. For a given product $u$ and price $p$, we model each customer's purchase probability as a convex combination of persona-level probabilities:
\[
q_{\balpha}(u,p) = \sum_{k=1}^K \alpha_k  q(x_k,u,p) + \alpha_{K+1}\cdot 0,
\]
where $\balpha = (\alpha_1,\ldots,\alpha_K,\alpha_{K+1}) \in \Delta^K$ denotes the mixture weights. The $(K+1)$-th persona is a dummy persona that never buys, which captures baseline non-purchase behavior. The weights $\balpha$ represent the relative contribution of each persona to the aggregate demand and will be estimated from historical data. Under this model, if $N$ customers are exposed to product $u$ at price $p$, then the aggregate demand follows the distribution 
\begin{equation}\label{eqn-llm-mix-binom-model}
	\cQ_{N,\balpha}(u,p) = \binomial\left(N,\, q_{\balpha}(u,p)\right).
\end{equation}

\paragraph{Monotone calibration.} The LLM-simulated purchase probabilities $q(x_k,u,p)$ may be systematically biased even when they contain useful ordinal information. Prior works have shown that language-model confidence scores and verbalized probabilities can be miscalibrated, although they often retain useful ranking information and can be improved by elicitation or post-hoc calibration \citep{JAD21,TMS23,PKT24,WSB24,KGR24}. To adjust for such systematic bias, we also consider a monotone calibrated version of the persona-level purchase probabilities. Let $\sigma(z) = 1/(1+e^{-z})$ denote the sigmoid function, which monotonically transforms a real number to a number in $[0,1]$. Its inverse function is $\sigma^{-1}(q) = \log( \frac{q}{1-q})$. For parameters $a\in\RR$ and $b>0$, define the monotone transformation
\[
T_{a,b}(q)=\sigma\left(a+b\cdot\sigma^{-1}(q)\right), \qquad\forall q\in[0,1].
\]
The parameter $a$ shifts the overall level of purchase probability, while $b$ controls the strength of the LLM's implied probability scale. The restriction $b>0$ preserves the ordering of purchase probabilities across products and personas. The calibrated aggregate purchase probability is
\[
q_{\balpha,a,b}^{\calb}(u,p)
=
\sum_{k=1}^K \alpha_k T_{a,b}\big(q(x_k,u,p)\big)
+ \alpha_{K+1}\cdot 0,
\]
and the corresponding calibrated demand model is
\begin{equation}\label{eqn-llm-mix-binom-model-cal}
	\cQ_{N,\balpha,a,b}^{\calb}(u,p)
	=
	\binomial\left(N,\, q_{\balpha,a,b}^{\calb}(u,p)\right).
\end{equation}
The original model $\cQ_{N,\balpha}(u,p)$ in \eqref{eqn-llm-mix-binom-model} is recovered by setting $a=0$ and $b=1$. Intuitively, the calibration operates on the log-odds scale. The intercept $a$ captures a global level bias in the LLM probabilities: a negative value lowers all active-persona purchase probabilities, while a positive value raises them. The slope $b$ acts like an inverse-temperature parameter. When $b<1$, extreme LLM probabilities are shrunk toward $1/2$; when $b>1$, differences between low and high LLM probabilities are amplified. The constraint $b>0$ preserves the LLM-implied ranking across personas, products, and prices, so calibration corrects the scale of the probabilities without discarding their ordinal information.

\subsection{Persona Weights Estimation} 

We estimate the persona mixture weights $\balpha$ and the exposure parameter $N$ from the historical dataset $\cD$ by maximum likelihood. A natural formulation is
\begin{equation}\label{eqn-obj-MLE}
	\min_{\balpha \in \Delta^{K}, \, N \in \mathbb{Z}_+}
	\quad
	- \sum_{j\in\cJ} \sum_{i\in\cI_j} \log \ell \left( d_{j,i} \bigm|  N, \, q_{\balpha}( u_j,p_{j,i})  \right),
\end{equation}
where $\ell(d \mid N, q) = \PP\left( z = d \right)$, with $z\sim \binomial(N,q)$, is the likelihood function for the binomial distribution.

In practice, however, the data may be \emph{truncated}: we only observe product-price pairs with positive realized demand. This is indeed the case in our dataset \citep{HM22}. To account for this, we instead solve
\begin{equation}\label{eqn-obj-trunc-raw}
	\min_{\balpha \in \Delta^{K}, \, N \in \mathbb{Z}_+}
	\quad
	-\sum_{j\in\cJ} \sum_{i\in\cI_j} \log \widetilde{\ell} \left( d_{j,i} \bigm|  N, \, q_{\balpha}( u_j,p_{j,i})  \right),
\end{equation}
where $\widetilde{\ell}(d \mid N, q) = \PP\left( z= d \mid z>0 \right)$ with $z\sim \binomial(N,q)$. Thus, $\widetilde{\ell}$ is the likelihood conditional on observing strictly positive demand. \Cref{lem-obj-equiv} below shows that this conditional likelihood admits an equivalent tractable form.

\begin{lemma}[Equivalent truncated objective]\label{lem-obj-equiv}
	Fix $q\in(0,1]$, $N\in\ZZ_+$ and $d\in[N]$. Define
	\begin{equation}\label{eqn-trunc-obj-clean}
		f(d\mid N,q) =  - d\log q - (N-d) \log (1-q) + \log \left[ 1 - (1-q)^N \right].
	\end{equation}
	For $z\sim \binomial(N,q)$, it holds that
	\[
	-\log \PP\left( z= d \mid z>0 \right)
	=
	f(d\mid N,q) - \log \binom{N}{d}.
	\]
	Moreover, $q\mapsto f(d\mid N,q)$ is a convex function for $q\in(0,1/2]$.
\end{lemma}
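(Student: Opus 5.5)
The identity is a direct computation. For $d\in[N]$ we have $d\ge 1$, so $\{z=d\}\subseteq\{z>0\}$. Hence
\[
\PP(z=d\mid z>0)=\frac{\PP(z=d)}{\PP(z>0)}=\frac{\binom{N}{d}q^d(1-q)^{N-d}}{1-(1-q)^N},
\]
where $\PP(z>0)=1-(1-q)^N>0$ because $q\in(0,1]$ and $N\ge 1$. When $q=1$ and $d<N$, both sides equal $+\infty$ under the usual convention $\log 0=-\infty$. Taking $-\log$ and separating the binomial coefficient gives $f(d\mid N,q)-\log\binom{N}{d}$.

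For convexity on $(0,1/2]$, split $f$ into three terms. The terms $-d\log q$ and $-(N-d)\log(1-q)$ are convex on $(0,1)$, with second derivatives $d/q^2\ge 0$ and $(N-d)/(1-q)^2\ge 0$. The remaining term $g(q)=\log[1-(1-q)^N]$ is concave, so the plan is to show that the convexity of the first part dominates. Writing $s=1-q\in[1/2,1)$, a direct computation gives
\[
g''(q)=-\frac{N(N-1)s^{N-2}\bigl(1-s^N\bigr)+N^2 s^{2N-2}}{\bigl(1-s^N\bigr)^2}.
\]
The worst case is $d=1$, since $d/q^2+(N-d)/(1-q)^2$ is nondecreasing in $d$ whenever $q\le 1/2$. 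So it suffices to prove
\[
\frac{1}{q^2}+\frac{N-1}{s^2}\;\ge\;\frac{N(N-1)s^{N-2}}{1-s^N}+\frac{N^2s^{2N-2}}{(1-s^N)^2}.
\]

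The main obstacle is this inequality. A cleaner route, which I would try first, is to handle $h(q)=-\log q+\log(1-(1-q)^N)=\log\frac{1-s^N}{1-s}=\log\sum_{m=0}^{N-1}s^m$ directly. The remaining term $-(N-1)\log(1-q)$ and the extra $-(d-1)\log q$ are already convex. The function $\phi(s)=\log\sum_{m=0}^{N-1}s^m$ is a log-partition function in $t=\log s$, hence convex in $t$, but we need convexity in $q$, which is the same as convexity in $s$ since $s=1-q$ is affine. Compute
\[
\phi''(s)=\frac{\mathrm{Var}(M)-\mathbb{E}M}{s^2},
\]
where $M$ has weights proportional to $s^m$ on $\{0,\dots,N-1\}$. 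This can be negative, so $h$ alone may fail to be convex. I would therefore add back $-(N-1)\log s$. The quantity
\[
\phi(s)-(N-1)\log s=\log\sum_{m=0}^{N-1}s^{-m}
\]
has second derivative $\bigl(\mathrm{Var}(M')+\mathbb{E}M'\bigr)/s^2\ge 0$, with $M'$ weighted by $s^{-m}$. This settles $d=1$ immediately, even without the restriction $q\le 1/2$. For general $d$, the leftover terms $-(d-1)\log q$ and $(d-1)\log(1-q)$ are not jointly convex, so I would instead redistribute the terms.

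A natural split is
\[
f=-(d-1)\log q-(N-d)\log(1-q)+\Bigl[-\log q+\log\bigl(1-(1-q)^N\bigr)\Bigr].
\]
Here the bracket equals $\phi(s)$. I would then need
\[
\frac{d-1}{q^2}+\frac{N-d}{s^2}+\frac{\mathrm{Var}(M)-\mathbb{E}M}{s^2}\ge 0.
\]
For $s\ge 1/2$, $M$ is approximately uniform or skewed toward $N-1$. When $d\le N-1$, the bound $\mathbb{E}M\le N-1$ together with $(d-1)/q^2\ge 4(d-1)\ge(d-1)/s^2$ (which uses $q\le 1/2$) gives a combined coefficient of at least $(N-1)-\mathbb{E}M+\mathrm{Var}(M)\ge 0$. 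This is exactly where $q\le 1/2$ enters. When $d=N$, the function $f=-N\log q+\log(1-s^N)$ should be checked separately, for example via $\log(1-s^N)-N\log(1-s)$ and the same log-sum argument with nonnegative coefficients. Verifying these two cases carefully is the key work of the proof.
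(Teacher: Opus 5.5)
Your argument is correct and is essentially the paper's. The paper writes $f(d\mid N,q)=L_1(q)+(d-1)\log\frac{1-q}{q}$ and proves $L_1''(q)=\bigl(N-1+\mathrm{Var}(R)-\mathbb{E}R\bigr)/(1-q)^2>0$ on $(0,1)$ by the same log-partition computation as your $\log\sum_m s^{-m}$ (your $M'$ is distributed as $N-1-R$). It then uses that $\log\frac{1-q}{q}$ is convex on $(0,1/2]$.

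That convexity fact is exactly your own observation that $d/q^2+(N-d)/(1-q)^2$ is nondecreasing in $d$ when $q\le 1/2$. So your remark that the leftover terms are ``not jointly convex'' is wrong on $(0,1/2]$, and your reduction to $d=1$ already completes the proof. Likewise, your final bound never uses $d\le N-1$, so the case $d=N$ needs no separate check.
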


\begin{proof}[Proof of \Cref{lem-obj-equiv}]
	See \Cref{sec-lem-obj-equiv-proof}.
\end{proof}

By \Cref{lem-obj-equiv}, and because $\balpha\mapsto q_{\balpha}( u,p)$ is linear, the following optimization problem is convex for each fixed $N$:
\begin{align}
	\min_{\balpha}
	&\quad
	\sum_{j\in\cJ} \sum_{i\in\cI_j} \left[ f\big (d_{j,i} \bigm| N,q_{\balpha}(u_j,p_{j,i}) \big) - \log\binom{N}{d_{j,i}} \right]  \notag \\[4pt]
	\text{s.t.}&\quad q_{\balpha}(u_j,p_{j,i}) \le \frac{1}{2}, \qquad \forall j\in\cJ,\quad \forall i\in\cI_j, \label{eqn-optimization-fix-N} \\[4pt]
	&\quad \balpha\in\Delta^{K}. \notag
\end{align}
Here, the constraint $q_{\balpha}(u_j,p_{j,i}) \le \frac{1}{2}$ guarantees the convexity of the problem. It is motivated by the fact that most customers do not purchase, and thus we expect these purchase probabilities to be well below $1/2$ in practical applications. This leads to the estimation procedure in \Cref{alg-weights-opt}.

\begin{algorithm}[h]
	{\bf Input:} Product-price-demand data $\{(u_j,p_{j,i},d_{j,i}):i\in\cI_j,\, j\in\cJ\}$, LLM-simulated purchase probabilities $\{q(x_k,u_j,p_{j,i}):k\in[K],\, i\in\cI_j, \,j\in\cJ\}$, and maximum exposure $N_{\max}$. \\
	{\bf For $N=1,...,N_{\max}$:} \\
	\hspace*{.6cm} If the demand is always observed, solve the optimization problem \eqref{eqn-obj-MLE}; if zero demand is truncated, solve the optimization problem \eqref{eqn-optimization-fix-N}.\\
	\hspace*{.6cm} Get an optimal solution $\widehat{\balpha}^{(N)}$ and optimal value $V^{(N)}$. \\
	Find $\widehat{N} = \argmin_{N\in[N_{\max}]} V^{(N)}$ and set $\widehat{\balpha} = \widehat{\balpha}^{(\widehat{N})}$. \\
	{\bf Output:} $(\widehat{N},\widehat{\balpha})$.\caption{Persona Weights and Exposure Optimization}
	\label{alg-weights-opt}
\end{algorithm}

For the calibrated model $\cQ_{N,\balpha,a,b}^{\calb}$, we estimate $(N,\balpha,a,b)$ by the same likelihood principle, replacing $q_{\balpha}(u,p)$ in \eqref{eqn-obj-MLE} or \eqref{eqn-optimization-fix-N} with $q_{\balpha,a,b}^{\calb}(u,p)$. For fixed $(a,b,N)$, optimization over $\balpha$ has the same form as above after replacing the purchase probabilities by their calibrated version. In implementation, we alternate between fitting the mixture weights $\balpha$ and updating the calibration parameters $(a,b)$.

\subsection{Demand Simulation and Pricing} 

Once the unknown parameters are estimated, the demand simulator is fully specified and can generate demand for new products under counterfactual prices. Given a product $u$ and candidate price $p$, we first query the LLM for persona-level purchase probabilities $\{q(x_k,u,p)\}_{k=1}^K$. We then combine these probabilities using the estimated mixture weights and sample synthetic demand from
\[ 
\widehat{\syndist}(u,p)
=
\cQ_{\widehat{N},\widehat{\balpha}}(u,p)
=
\binomial\big( \widehat{N} , \, q_{\widehat{\balpha}} (u,p) \big),
\]
or 
\[
\widehat{\syndist}(u,p)
=
\cQ_{\widehat N, \widehat \balpha, \widehat a, \widehat b}^{\calb}(u,p)
=
\binomial\left( \widehat N,\, q_{ \widehat \balpha, \widehat a, \widehat b}^{\calb}(u,p)\right).
\]
Repeating this procedure over a grid of prices yields a synthetic demand curve that can be used to evaluate downstream decision-making algorithms, such as dynamic pricing policies, for new products and counterfactual scenarios.

The simulator also induces a pricing rule. Let $F$ be a functional that maps a demand distribution to a scalar objective, such as expected revenue or a risk-aware criterion. Given a product $u$, we choose
\[
\widehat{p} = \argmax_{p} F\Big( \widehat{\syndist}(u,p) \Big).
\]
We now give two examples of $F$.

\begin{example}[Expected revenue]
	The expected revenue is defined by
	\[
	F\big( \cQ(u,p) \big) = p\cdot \EE_{d\sim\cQ(u,p)}\left[ d \right],
	\]
\end{example}

\begin{example}[Conditional value at risk]
	The conditional value at risk (CVaR) of the revenue is defined as follows. Let $d\sim\cQ(u,p)$, then the revenue is $R = p\cdot d$. Given a tail probability parameter $\tau\in(0,1)$, we define $\cvar$ by
	\[
	\cvar_\tau(R)
	=
	\EE\left[ R \mid R \le \quantile_{\tau} (R) \right],
	\]
	which measures the average revenue in the worst $\tau$ fraction of outcomes.
\end{example}

\section{Numerical Experiments}

In this section, we test our framework on real data. The code is available at \url{https://github.com/ch3702/LLM-demand-simulator}.

\subsection{Dataset}

We use an H\&M Personalized Fashion Recommendations dataset \citep{HM22}. The dataset covers customer transactions from September 2018 to September 2020, and consists of the following three types of information. First, transaction records provide the purchase date, customer ID, product ID, price, and sales channel. Second, product metadata includes product IDs, names, types, text descriptions, and images. Third, customer records contain demographic and behavioral attributes such as age and fashion-news engagement.

\paragraph{Products.}
We focus on products in the trousers category sold through the online channel, in the one-year span of September 20, 2018 to September 19, 2019. To obtain a set of products with nontrivial price variation, we restrict attention to items observed at at least five distinct prices and rank products by transaction volume. We take the top $100$ products. For each product, we aggregate transactions to the $(\text{date},\,\text{product},\,\text{price})$ level and define daily demand as the number of purchases observed for that triple.\footnote{In the dataset, the prices are normalized to lie in $[0,1]$. Following the suggestion of \url{https://www.kaggle.com/c/h-and-m-personalized-fashion-recommendations/discussion/310496}, we rescale the prices by $590$.}

\paragraph{Personas.}
To represent heterogeneous customer preferences, we partition historical customers into cells using age, purchase frequency, typical paid price, and dominant product category. Each cell is treated as a persona. We keep the top $K=50$ personas with the most customers, and add a dummy persona that never makes a purchase.

\subsection{LLM Simulation Procedure}

We use GPT-5-mini \citep{GPT5} to simulate customers. We query it with prompts that consist of (i) persona description, (ii) product name, text descriptions and product image, and (iii) candidate prices. The LLM is asked to return persona-specific purchase probabilities at each candidate price. An illustrative prompt is shown in \Cref{fig-prompt}.
\begin{figure}[h]
	\FIGURE{
		\fbox{
			\begin{minipage}{0.95\textwidth}
				\ttfamily
				You are a customer of H\&M in the years 2018-2019. Your age is 25-34. You make about 1 to 6 purchases per month. Your typical paid price is between 18.98 and 22.93. Most of your purchases are in: Trousers. \\
\,\\
				Task: Given a product and a list of prices, return the probability you would buy at each price. Output JSON exactly: \{"prices": [...], "p\_buy": [...], "reason": "<=30 words"\}. \\
\,\\
				Product name: Julia RW Skinny Denim TRS\\
				Product type: Trousers\\
				Product color: Light Blue\\
				Detailed description: 5-pocket jeans in superstretch washed denim with a regular waist and super-skinny legs. \\
				\,\\
				Offered prices (USD): [11.99, 12.99, 13.99, 14.99, 15.99, 16.14, 16.66, 16.99, 17.99, 19.99]
			\end{minipage}
		}
	}
	{Illustrative Prompt for Eliciting Persona-Level Purchase Probabilities\label{fig-prompt}}
	{}
\end{figure}
\Cref{fig-image} shows the associated product image.
\begin{figure}[h]
	\centering
	\caption{Illustrative Product Image Provided to the LLM\label{fig-image}}
	\includegraphics[scale=0.1]{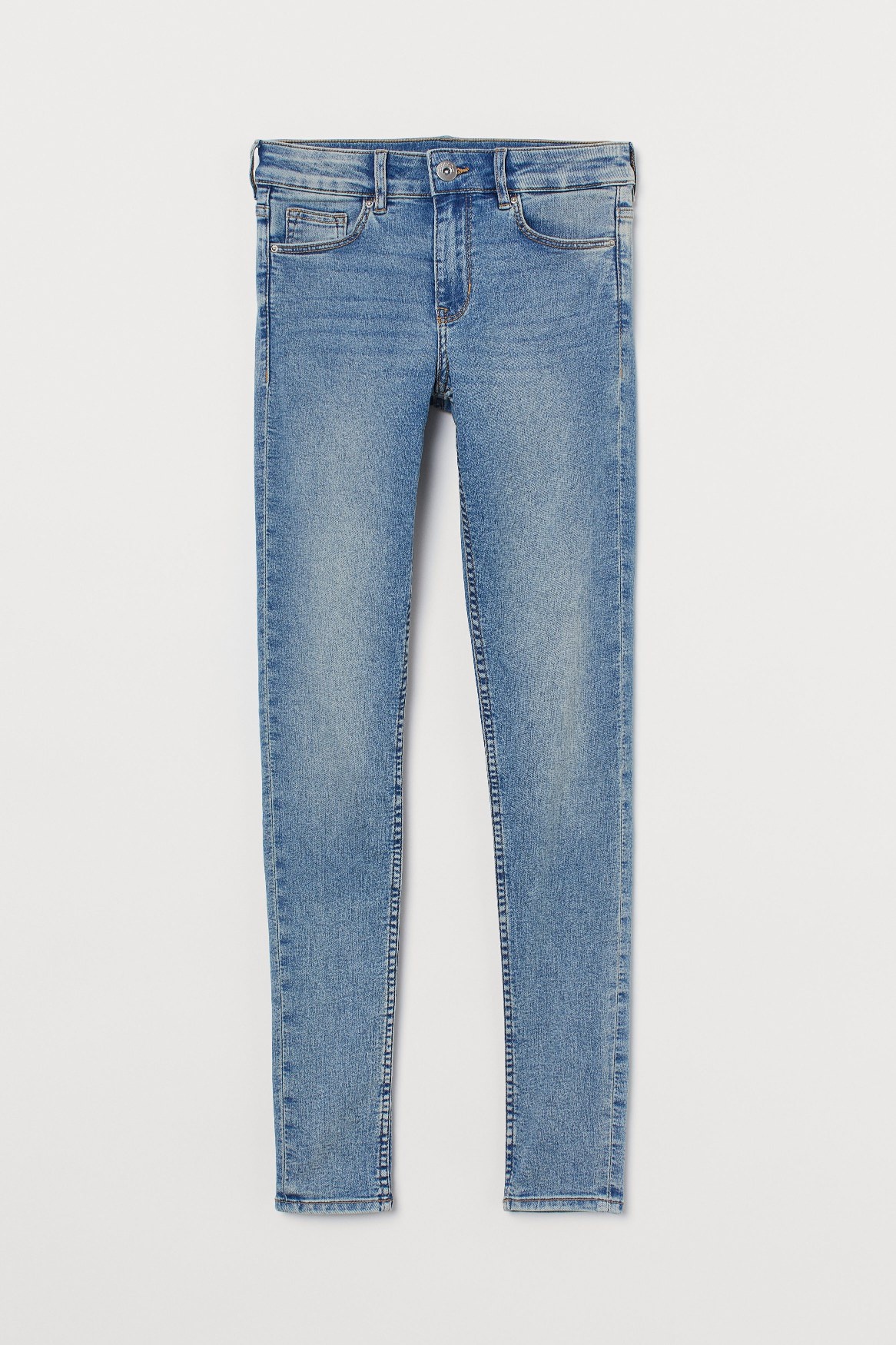}
\end{figure}
A typical LLM response is given in \Cref{fig-response}.
\begin{figure}[h]
	\FIGURE{
		\fbox{
			\begin{minipage}{0.95\textwidth}
				\ttfamily
				\{"prices":[11.99, 12.99, 13.99, 14.99, 15.99, 16.14, 16.66, 16.99, 17.99, 19.99],\\
				"p\_buy":[0.95, 0.93, 0.90, 0.86, 0.80, 0.78, 0.75, 0.72, 0.68, 0.64],\\
				"reason":"I'm a frequent H\&M trousers buyer who usually pays ~\$19-23; discounts sharply raise likelihood, decreasing as price approaches my usual spend."\}
			\end{minipage}
		}
	}
	{Illustrative LLM Response\label{fig-response}}
	{}
\end{figure}

\subsection{Illustrative Use Case}

We now present a simple use case to illustrate how a fitted LLM-based simulator is used for counterfactual demand simulation and pricing. Consider one H\&M trousers product in our study; its product image and the prompt used to elicit persona-level purchase probabilities are shown later in \Cref{fig-image,fig-prompt}. Once the simulator is fitted, it can generate a predicted demand distribution for any candidate counterfactual price.

\begin{figure}[h]
	\FIGURE{
		\subcaptionbox{Price \$11.99}
		{\includegraphics[width=0.32\linewidth]{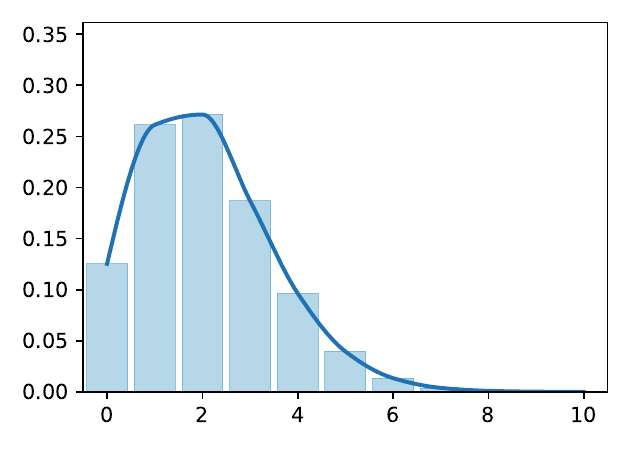}}
		\subcaptionbox{Price \$16.14}
		{\includegraphics[width=0.32\linewidth]{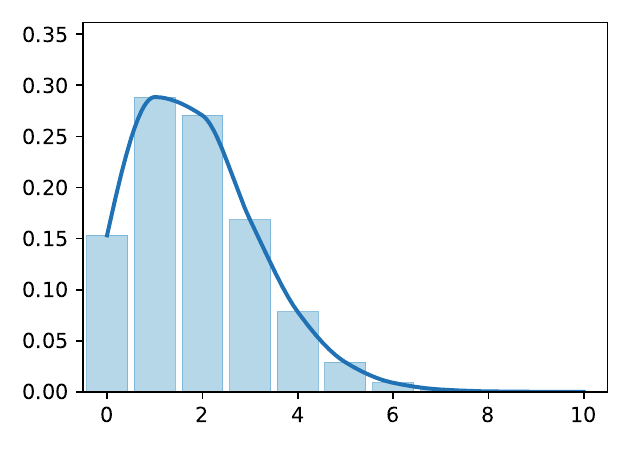}}
		\subcaptionbox{Price \$19.99}
		{\includegraphics[width=0.32\linewidth]{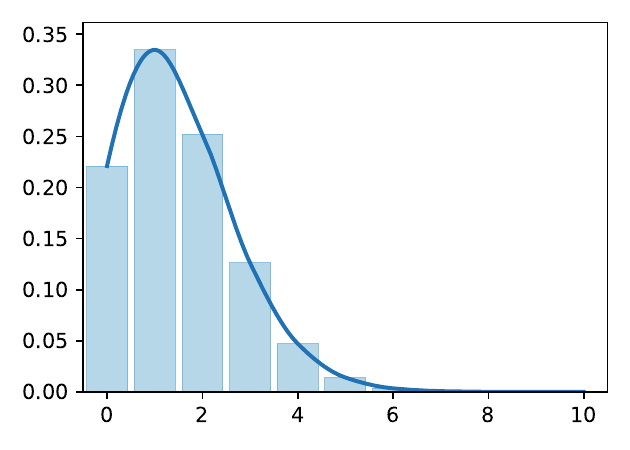}}
	}
	{Simulated Demand Distribution under Different Prices\label{fig-demo-demand}}
	{}
\end{figure}

As shown in \Cref{fig-demo-demand}, the simulator describes how the entire predictive demand distribution shifts with price, rather than only how mean demand changes. This distributional output can then be mapped to different pricing objectives. In \Cref{fig-demo-pricing}, we compare two examples: expected revenue and the $0.25$-level CVaR of revenue.

\begin{figure}[h]
	\FIGURE{
		\subcaptionbox{Expected Revenue}
		{\includegraphics[width=0.48\linewidth]{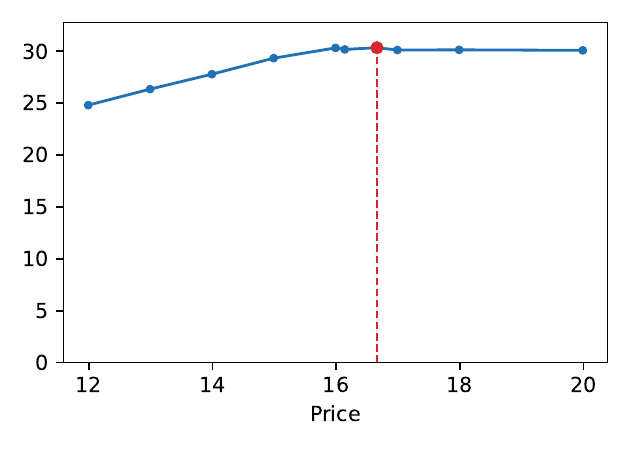}}
		\hfill\subcaptionbox{$0.25$-Level $\cvar$ of Revenue}
		{\includegraphics[width=0.48\linewidth]{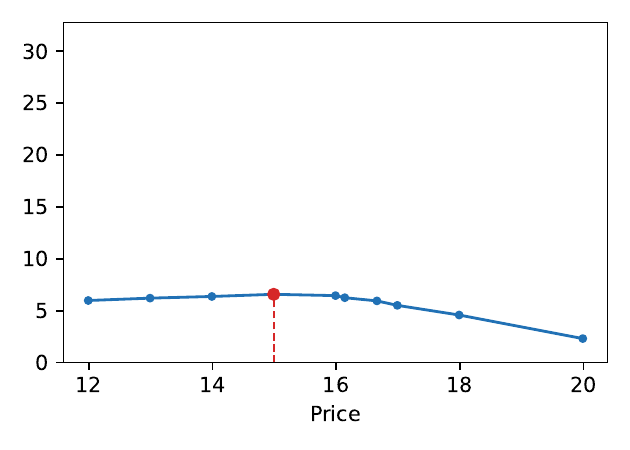}}
	}
	{Pricing Objectives under Counterfactual Prices\label{fig-demo-pricing}}
	{}
\end{figure}

This example highlights the main practical value of the framework. The optimal price depends on the objective: the price that maximizes expected revenue need not be the same as the price that performs best under a risk-aware objective. Because the simulator generates a full predictive distribution, it supports both average-case and risk-aware evaluation of counterfactual prices for new products.

\subsection{Experiments on Demand Prediction}

Next, we will evaluate our models' performance on demand prediction and pricing. We first consider demand prediction. We split the $100$ products $\cJ$ into a training set $\cJ_{\tr}$ and a testing set $\cJ_{\te}$, with $|\cJ_{\tr}|=60$ and $|\cJ_{\te}|=40$. The training set $\cJ_{\tr}$ is used to fit the demand models \eqref{eqn-llm-mix-binom-model} and \eqref{eqn-llm-mix-binom-model-cal} using the truncated likelihood objective \eqref{eqn-optimization-fix-N}. The models are then evaluated on the testing set $\cJ_{\te}$ by comparing the true demand samples and predicted demand distributions.

\paragraph{Evaluation metrics.} We evaluate the fit of a distributional model using the \emph{continuous ranked probability score} and the \emph{probability integral transform}. 
\begin{itemize}
	\item The continuous ranked probability score (CRPS) is proposed by \cite{MWi76} to measure the quality of a distributional forecast. CRPS of a predicted distribution $\widehat{\cP}$ with respect to a real sample $z\sim\cP$ is defined by
	\[
	d(\widehat{\cP},z) = \EE_{z'\sim\widehat{\cP}}|z'-z| - \frac{1}{2}\EE_{z',z''\sim\widehat{\cP}}|z'-z''|.
	\]
	It has the property that $\EE_{z\sim\cP} \big[ d(\widehat{\cP},z)\big] \ge \EE_{z\sim\cP} \big[ d(\cP,z)\big] $ for all $\widehat{\cP}$, and equality holds if and only if $\widehat{\cP}=\cP$. In our setting, we evaluate a predicted demand distribution $\widehat{\cQ}(u,p)$ with respect to the testing set $\cJ_{\te}$ by computing
	\[
	\CRPS(\widehat{\cQ}) = \frac{1}{\sum_{j\in\cJ_{\te}}|\cI_j|} \sum_{j\in\cJ_{\te}} \sum_{i\in\cI_j} d\big( \widehat{\cQ}^+(u_j,p_{j,i}), \, d_{j,i} \big).
	\]
	Here, $\widehat{\cQ}^+(u,p)$ denotes the zero-truncated version of $\widehat{\cQ}(u,p)$, that is, $\widehat{\cQ}^+(u,p) = \Law(z\mid z>0)$ where $z\sim\widehat{\cQ}(u,p)$. This accounts for the fact that we only observe positive demand samples. Lower CRPS indicates more accurate distribution fit.

	\item We also evaluate distributional fit using the randomized probability integral transform (PIT). Let $\widehat{F}^+_{u,p}$ denote the cumulative distribution function (CDF) of $\widehat{\cQ}^+(u,p)$, that is, $\widehat{F}^+_{u,p}(d) = \PP_{z\sim\widehat{\cQ}^+(u,p)}(z\le d)$. Let $\widehat{F}^+_{u,p}(d-) = \PP_{z\sim\widehat{\cQ}^+(u,p)}(z < d)$. Define
	\[
	Y_{j,i} = \widehat{F}^+_{u_j,p_{j,i}}(d_{j,i}-) + V_{j,i} \left[ \widehat{F}^+_{u_j,p_{j,i}}(d_{j,i}) - \widehat{F}^+_{u_j,p_{j,i}}(d_{j,i}-) \right],
	\]
	where $V_{j,i}\sim\cU(0,1)$ i.i.d. If the predicted distribution is accurate, then we expect $\{Y_{j,i}\}$ to be approximately i.i.d.~$\cU(0,1)$. We summarize the deviation of $\{Y_{j,i}\}$ from $\cU(0,1)$ by the Kolmogorov--Smirnov distance
	\[
	\KSPIT(\widehat{\cQ})
	=
	\sup_{t\in[0,1]}
	\left|
	\frac{1}{\sum_{j\in\cJ_{\te}}|\cI_j|}\sum_{j\in\cJ_{\te}}\sum_{i\in\cI_j} \mathbf{1}\{Y_{j,i}\le t\}-t
	\right|,
	\]
	where smaller values indicate better calibration.
\end{itemize}

Additionally, we report the mean absolute error (MAE) and root mean squared error (RMSE) that measures the accuracy for mean prediction. Let $\widehat{\mu}^+(u,p)$ denote the mean of $\widehat{\cQ}^+(u,p)$. We compute
\begin{align*}
	& \MAE(\widehat{\cQ}) = \frac{1}{\sum_{j\in\cJ_{\te}}|\cI_j|}\sum_{j\in\cJ_{\te}}\sum_{i\in\cI_j} \left| \widehat{\mu}^+(u_j,p_{j,i}) - d_{j,i} \right|, \\[4pt]
	& \RMSE(\widehat{\cQ}) = \sqrt{\frac{1}{\sum_{j\in\cJ_{\te}}|\cI_j|}\sum_{j\in\cJ_{\te}}\sum_{i\in\cI_j} \left( \widehat{\mu}^+(u_j,p_{j,i}) - d_{j,i} \right)^2}. 
\end{align*}

\paragraph{Benchmark models.} We compare the proposed LLM-based simulators against two distributional benchmarks. The first is an embedding-based binomial model. Let $\emb(u)$ denote a product embedding constructed from product text and image information, and let $\emb(x_k)$ denote a persona embedding. These embeddings are constructed using the vision-language encoder model SigLIP 2 \citep{SigLIP2}. After standardizing product embeddings, persona embeddings, and prices, the persona-level purchase probability is modeled as
\[
q_{\btheta}^{\emb}(x_k,u,p)
=
\sigma\left(
\btheta^\top
\begin{pmatrix}
	1 \\ p \\ \emb(u) \\ \emb(x_k)
\end{pmatrix}
\right),
\]
and the aggregate purchase probability is
\[
q_{\balpha,\btheta}^{\emb}(u,p)
=
\sum_{k=1}^K \alpha_k q_{\btheta}^{\emb}(x_k,u,p)+\alpha_{K+1}\cdot 0.
\]
The resulting predictive distribution is 
\begin{equation}\label{eqn-embed-binom-model}
	\cQ^{\emb}_{N,\balpha,\btheta}(u,p)=\binomial(N,q_{\balpha,\btheta}^{\emb}(u,p)).
\end{equation}
When fitting the three binomial models \eqref{eqn-llm-mix-binom-model}, \eqref{eqn-llm-mix-binom-model-cal} and \eqref{eqn-embed-binom-model}, to reduce computational cost, we search over $N\in\{100,150,200,250\}$ in \Cref{alg-weights-opt}.

The second benchmark is a normally distributed demand model based on the product embedding and price, which mainly targets modeling the mean demand and does not incorporate persona information. With the same standardized product embedding $\emb(u)$ and price $p$, we fit
\begin{equation}\label{eqn-normal-model}
	\cQ^{\normal}_{\bbeta,\tau}(u,p) = \cN\left( \bbeta^\top
	\begin{pmatrix}
		1 \\ p \\ \emb(u)
	\end{pmatrix}, ~  \tau^2\right)
\end{equation}
For distributional evaluation, we convert this continuous model into a discrete demand distribution by rounding and truncating below zero. When evaluating mean prediction metrics MAE and RMSE, we use the unrounded mean.

\paragraph{Results for demand prediction.}  In \Cref{tab-dist-fit}, we report the distributional fit metrics $\CRPS$ and $\KS_{\PIT}$, as well as mean prediction metrics $\MAE$ and $\RMSE$, averaged over $10$ random train-test splits. For all metrics, smaller values denote better results. Here $\llm$ and $\llmcal$ refer to the mixture-of-persona model \eqref{eqn-llm-mix-binom-model} and its logit-calibrated version \eqref{eqn-llm-mix-binom-model-cal}, respectively, and $\emb$ and $\normal$ are the benchmark embedding-based model \eqref{eqn-embed-binom-model} and normally distributed model \eqref{eqn-normal-model}, respectively.

\begin{table}[h]
	\TABLE
	{Demand Distributional Fit and Mean Prediction Metrics\label{tab-dist-fit}}
	{\begin{tabular}{ccccc}
			\hline
			& $\CRPS$ & $\KSPIT$ & $\MAE$ & $\RMSE$ \\ \hline
			$\llm$ & $1.00$ & $0.32$ & $1.44$ & $1.87$  \\
			$\llmcal$ & $\mathbf{0.94}$ & $\mathbf{0.30}$ & $\mathbf{1.38}$ & $\mathbf{1.79}$ \\
			$\emb$ & $0.99$ & $0.32$ & $1.44$ & $1.86$ \\
			$\normal$ & $1.19$ & $0.43$ & $1.46$ & $1.86$  \\ \hline
	\end{tabular}}
	{}
\end{table}

We observe that $\llmcal$ has the best overall performance, while $\llm$ and $\emb$ have similar performance. This shows that the ordinal information contained in LLM-elicited purchase probabilities can provide additional benefits over using embeddings only, but to realize them, some calibration, such as the logit calibration used by $\llmcal$, is necessary. Moreover, these three methods consistently outperform $\normal$, demonstrating the advantages of persona-based modeling.

\subsection{Experiments on Pricing}

We now turn to a pricing experiment. In particular, we study the sample efficiency of $\llmcal$ to learn a good pricing policy. As the performance evaluation for pricing requires counterfactual demand distributions at different prices, which are not directly available in the dataset, we fit a demand distribution model from the dataset to serve as the ground truth distribution.  Because $\llmcal$ has the best distributional fit in the previous section, we use it to fit a ground truth distribution.

Specifically, we split the $100$ products $\cJ$ into three subsets $\cJ_1$, $\cJ_2$ and $\cJ_3$, with $|\cJ_1|=60$, $|\cJ_2|=25$ and $|\cJ_3|=15$. The first subset $\cJ_1$ is used to fit a ``ground truth'' $\llmcal$ demand distribution $\widehat{\cQ}^*$. We generate synthetic demand samples from $\widehat{\cQ}^*$ at the observed product-price pairs in $\cJ_2$:
\[
\widetilde{\cD}_j=\{(u_j,p_{j,i},\widetilde d_{j,i}):i\in\cI_j\},\qquad \widetilde d_{j,i}\sim \widehat{\cQ}^*(u_j,p_{j,i}),\quad j\in\cJ_2.
\]
For a training fraction $\rho\in(0,1]$, let $\widetilde{\cS}_2(\rho)$ be a random subset containing a $\rho$ fraction of the synthetic samples in $\bigcup_{j\in\cJ_2}\widetilde{\cD}_j$. Using $\widetilde{\cS}_2(\rho)$, we fit an \emph{estimated simulator}
\[
\widehat{\cQ}_{\rho}(u,p)=\cQ_{\widehat N_{\rho},\widehat{\balpha}_{\rho},\widehat a_{\rho},\widehat b_{\rho}}^{\calb}(u,p),
\]
where estimation uses the standard likelihood \eqref{eqn-obj-MLE}. Varying $\rho$ allows us to evaluate how many synthetic samples from $\cJ_2$ are needed to learn a pricing model with good out-of-sample decision quality.

We then evaluate pricing on the held-out products $\cJ_3$. For each $j\in\cJ_3$, let $\Omega_j=\{p_{j,i}:i\in\cI_j\}$ denote the collection of prices for product $j$ observed in the real data. The optimal price under an objective $F$ is
\[
p_j^*=\argmax_{p\in \Omega_j}\ F\big(\widehat{\cQ}^*(u_j,p)\big),
\]
while the price induced by the estimated simulator $\widehat{\cQ}_{\rho}$ is
\[
p^{\rho}_j = \argmax_{p\in \Omega_j} \ F\big(\widehat{\cQ}_{\rho}(u_j,p)\big).
\]
We measure pricing quality by mean performance ratio under the ground-truth demand model:
\[
\Ratio_F(\rho) = \frac{1}{|\cJ_3|}\sum_{j\in\cJ_3}
\frac{F\big(\widehat{\cQ}^*(u_j,p_j^{\rho})\big)}{F\big(\widehat{\cQ}^*(u_j,p_j^*)\big)}.
\]
We consider expected revenue and $\cvar_{0.25}$ as the pricing objectives. Larger values of $\Ratio_F(\rho)$ indicate that the model trained on $\widetilde{\cS}_2(\rho)$ has learned a pricing policy closer to the ground-truth optimal policy.

\paragraph{Results for pricing.}
In \Cref{fig-pricing-efficiency}, we plot $\Ratio_{F}(\rho)$ as a function of the training fraction $\rho\in\{0.01,0.025,0.05\}\cup\{0.1\cdot m:m\in[8]\}$, averaged over $10$ random three-way splits of data. We observe that for $\rho = 0.025$, the learned pricing policy already achieves a relative performance of over $90\%$ for the expected revenue, and $87\%$ for the $0.25$-level CVaR. On average the dataset $\widetilde{\cS}_2$ with $\rho=1$ has approximately $2923$ samples and $25$ products in total, so $\rho=0.025$ translates to $73$ samples, that is, less than $3$ samples per product on average. This shows that our modeling framework is highly sample efficient.

\begin{figure}[h]
	\FIGURE{
		\subcaptionbox{Expected Revenue}
		{\includegraphics[width=0.48\linewidth]{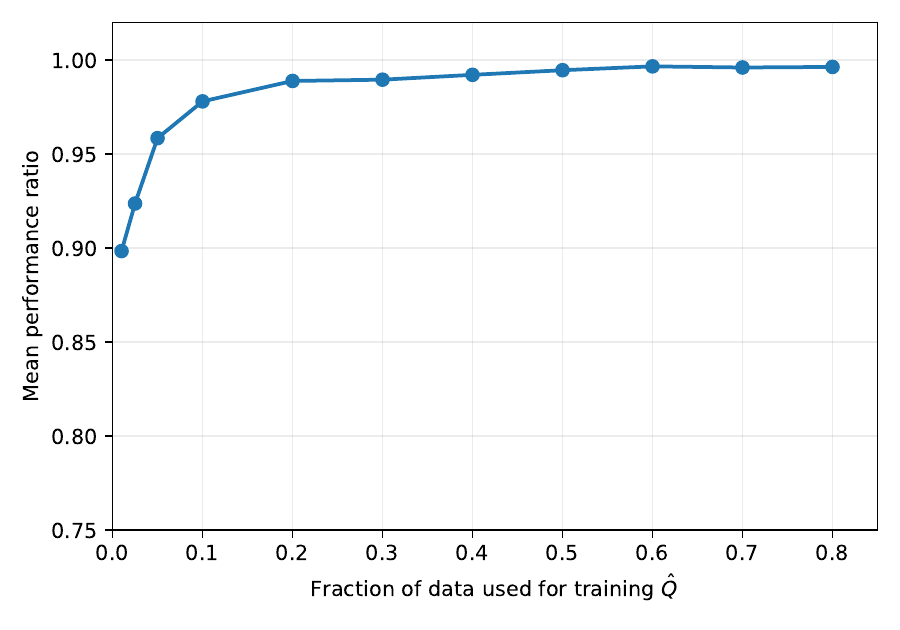}}
		\hfill\subcaptionbox{$0.25$-Level $\cvar$ of Revenue}
		{\includegraphics[width=0.48\linewidth]{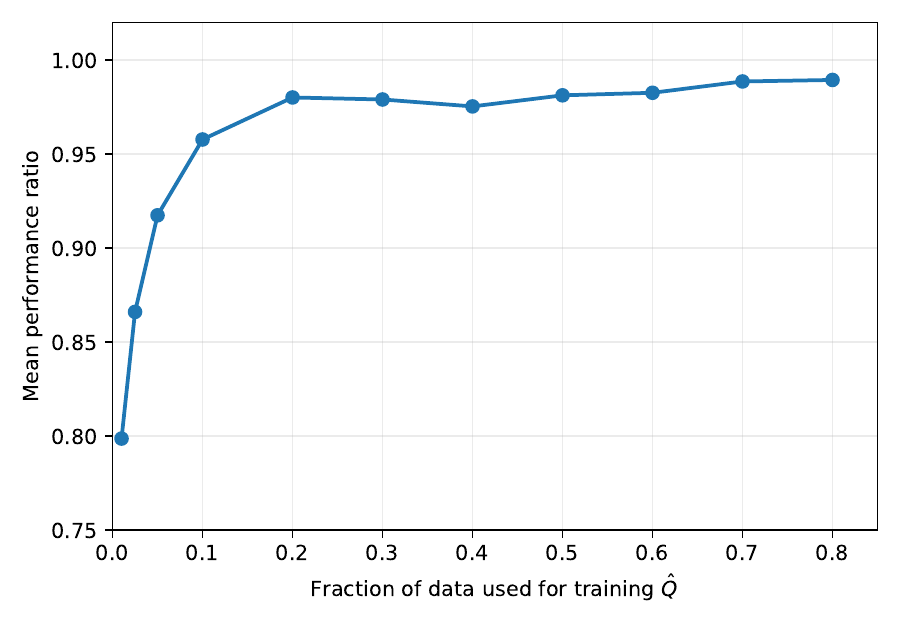}}
	}
	{Pricing Sample Efficiency of the $\llmcal$ Model\label{fig-pricing-efficiency}}
	{}
\end{figure}

\section{Discussions}

This paper develops an LLM-based demand simulator for pricing that can incorporate rich product information including text descriptions and images. The framework aggregates LLM-generated persona-level purchase probabilities into a tractable stochastic demand model that is useful for downstream pricing decisions. Numerical experiments show that the fitted simulator is reasonably well aligned with the observed data and gives strong pricing performance under both expected-revenue and CVaR objectives.

\section*{Acknolwedgement}

This research is supported by the 2025--2026 Deming Doctoral Fellowship from the Deming Center for Operational Innovation and Excellence at Columbia Business School.

\newpage

\appendix
\crefalias{section}{appendix}
\crefalias{subsection}{appendix}

\section{Proof of \Cref{lem-obj-equiv}}\label{sec-lem-obj-equiv-proof} 

By definition,
\begin{align*}
-\log \PP\left( z= d \mid z>0 \right)
&=
-\log \frac{\PP(z=d)}{1-\PP(z=0)} \\[4pt]
&=
\log \PP(z=d) + \log \left( 1 - \PP(z=0) \right) \\[4pt]
&=
-\log \left[ \binom{N}{d} q^d (1-q)^{N-d} \right] + \log \left[ 1 - (1-q)^N \right] \\[4pt]
&=
-\log \binom{N}{d} - d\log q - (N-d) \log (1-q) + \log \left[ 1 - (1-q)^N \right].
\end{align*}
Since $-\log \binom{N}{d}$ is a constant independent of $q$, then it suffices to prove that the function
\begin{align*}
L_d(q) &=
- d\log q - (N-d) \log (1-q) + \log \left[ 1 - (1-q)^N \right] \\[4pt]
&= 
- (d-1)\log q - (N-d) \log (1-q) + \log \left[ \sum_{r=0}^{N-1}(1-q)^r \right]
\end{align*}
is convex on $(0,1/2]$. For every $d \ge 1$,
\[
L_d(q) = L_1(q) + (d-1)\log\frac{1-q}{q}.
\]
Since $q\mapsto \log\frac{1-q}{q}$ is convex on $(0,1/2]$, then it suffices to prove that the function $L_1(q)$ is convex.

We now prove that the function $L_1(q)$ is in fact convex on $(0,1)$. Let $A(t) = \sum_{r=0}^{N-1}t^r$, then $A'(t) = \sum_{r=1}^{N-1} r t^{r-1}$ and $A''(t) = \sum_{r=2}^{N-1} r(r-1) t^{r-2}$. We have
\begin{align}
& L_1'(q) = \frac{N-1}{1-q} - \frac{A'(1-q)}{A(1-q)}, \notag \\[4pt]
& L_1''(q) = \frac{N-1}{(1-q)^2} + \frac{A''(1-q)}{A(1-q)} - \left( \frac{A'(1-q)}{A(1-q)} \right)^2. \label{eqn-trunc-loss-Hessian-1}
\end{align}
Define a discrete random variable $R$ with distribution
\[
\PP(R = r) = \frac{(1-q)^r}{\sum_{s=0}^{N-1}(1-q)^s}, \quad \forall r\in\{0,1,...,N-1\}.
\]
We observe that
\begin{align}
& \EE \left[ R \right] = \frac{\sum_{r=0}^{N-1}r(1-q)^r}{\sum_{r=0}^{N-1}(1-q)^r} = \frac{A'(1-q)}{A(1-q)} \cdot (1-q), \label{eqn-trunc-loss-rv-1} \\[4pt]
& \EE\left[ R(R-1) \right] = \frac{\sum_{r=0}^{N-1}r(r-1)(1-q)^r}{\sum_{r=0}^{N-1}(1-q)^r} = \frac{A''(1-q)}{A(1-q)} \cdot (1-q)^2. \label{eqn-trunc-loss-rv-2}
\end{align}
Substituting \eqref{eqn-trunc-loss-rv-1} and \eqref{eqn-trunc-loss-rv-2} into \eqref{eqn-trunc-loss-Hessian-1} yields that for all $q\in(0,1)$,
\begin{align*}
L_1''(q) 
&= 
\frac{N-1}{(1-q)^2}
+
\frac{\EE[R^2] - \EE [R]}{(1-q)^2} - \frac{\EE[R]^2}{(1-q)^2} \\[4pt]
&=
\frac{N-1 + \var(R) - \EE[R]}{(1-q)^2} > 0
\end{align*}
because $\EE[R] < N-1$. Thus, $L_1(q)$ is strongly convex on $(0,1)$. This completes the proof.

\newpage

{
\bibliographystyle{ims}
\bibliography{bib}
}

\end{document}